\DeclareFontFamily{OT1}{pzc}{}
\DeclareFontShape{OT1}{pzc}{m}{it}{<-> s * [1.100] pzcmi7t}{}
\DeclareMathAlphabet{\mathpzc}{OT1}{pzc}{m}{it}
\crefname{lemma}{lemma}{lemmas}
\Crefname{lemma}{Lemma}{Lemmas}
\crefname{theorem}{theorem}{theorems}
\Crefname{theorem}{Theorem}{Theorems}
\crefname{definition}{definition}{definitions}
\Crefname{definition}{Definition}{Definitions}
\crefname{remark}{remark}{remarks}
\Crefname{remark}{Remark}{Remarks}
\begin{document}
\title{Deterministic Graph-Walking Program Mining}
\author{Peter Belcak \and Roger Wattenhofer}
\authorrunning{P. Belcak and R. Wattenhofer}
\institute{ETH Zürich, Rämistrasse 101, 8092 Zürich \\ \email{\{belcak,wattenhofer\}@ethz.ch}}
\maketitle
\begin{abstract}

Owing to their versatility, graph structures admit representations of intricate relationships between the separate entities comprising the data.
We formalise the notion of connection between two vertex sets in terms of edge and vertex features by introducing graph-walking programs.
We give two algorithms for mining of deterministic graph-walking programs that yield programs in the order of increasing length.
These programs characterise linear long-distance relationships between the given two vertex sets in the context of the whole graph.

\keywords{Graph Walks \and Complex Networks \and Program Mining \and Program Induction}
\end{abstract}
\section{Introduction}
\label{section:introduction}

While data has been stored in the form of tables since time immemorial, more complex data is often represented with graphs.
This is because graph databases generalise conventional table-driven data storage methods, allowing for modelling of involved relationships among entities represented therein.
As such, graph analysis and mining methods will be at the center of attention when it comes to contextual understanding of relationships between individual datapoints within a large database.

Here we investigate the identification of one type of such relationship between two groups of graph's vertices.
For an illustrative example (\Cref{figure:prosecution_example}), consider an individual who has just graduated from high school (starting qualifications $S$) and aims to reach a target career (target qualifications $T$ ) while being permitted only one study focus at the time -- e.g. studying either social or biological sciences, but not both.
What sequence of decisions with regards to their study foci should they take?
Notice that the choice of focus made at every stage of the individual's education leads to a restriction on what qualifications they can obtain in the future.
Thus, at each stage of their education, they will need to focus on qualifications that are pre-requisites for those that lead to $T$.
Attempting to solve the problem on our own, we can search for a sequence of instructions that leads us from $S$ to $T$ either by naively tracing out paths from $S$ and reading out instructions one by one, or by enumerating all possible instruction sequences and then verifying if they indeed do lead from $S$ to $T$.

\begin{figure}[h!]
    \centering
    \includegraphics[width=\textwidth]{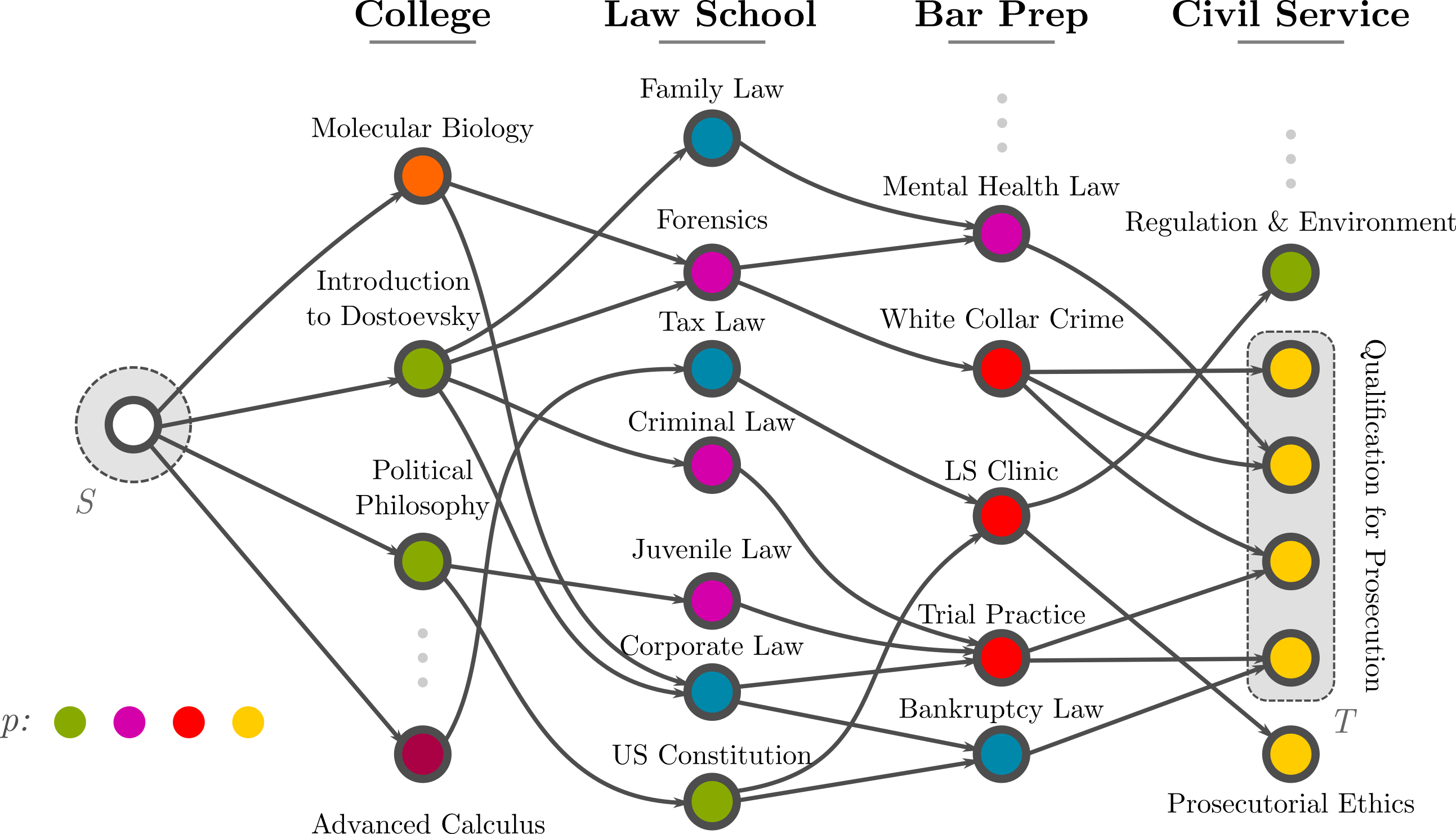}
    \caption{
        A depiction of the qualification pathway $p$ for an individual who has just graduated from high school (the starting qualification in $S$) and aims to become a prosecutor in the United States (target qualifications $T$).
        Vertices, colours, edges represent qualifications, qualification foci, and dependencies, respectively.
        Going into college, they will need to choose a focus that maximises their chances of being admitted to a law school (most likely social sciences).
        In law school, they will need to focus on criminal rather than tax or corporate law and prepare diligently for their barrister examinations.
        Finally, they will need to satisfy the necessary pre-requisites of civil service before becoming a prosecutor.
    }
    \label{figure:prosecution_example}
\end{figure}

\begin{figure}[h!]
    \centering
    \includegraphics[width=\textwidth]{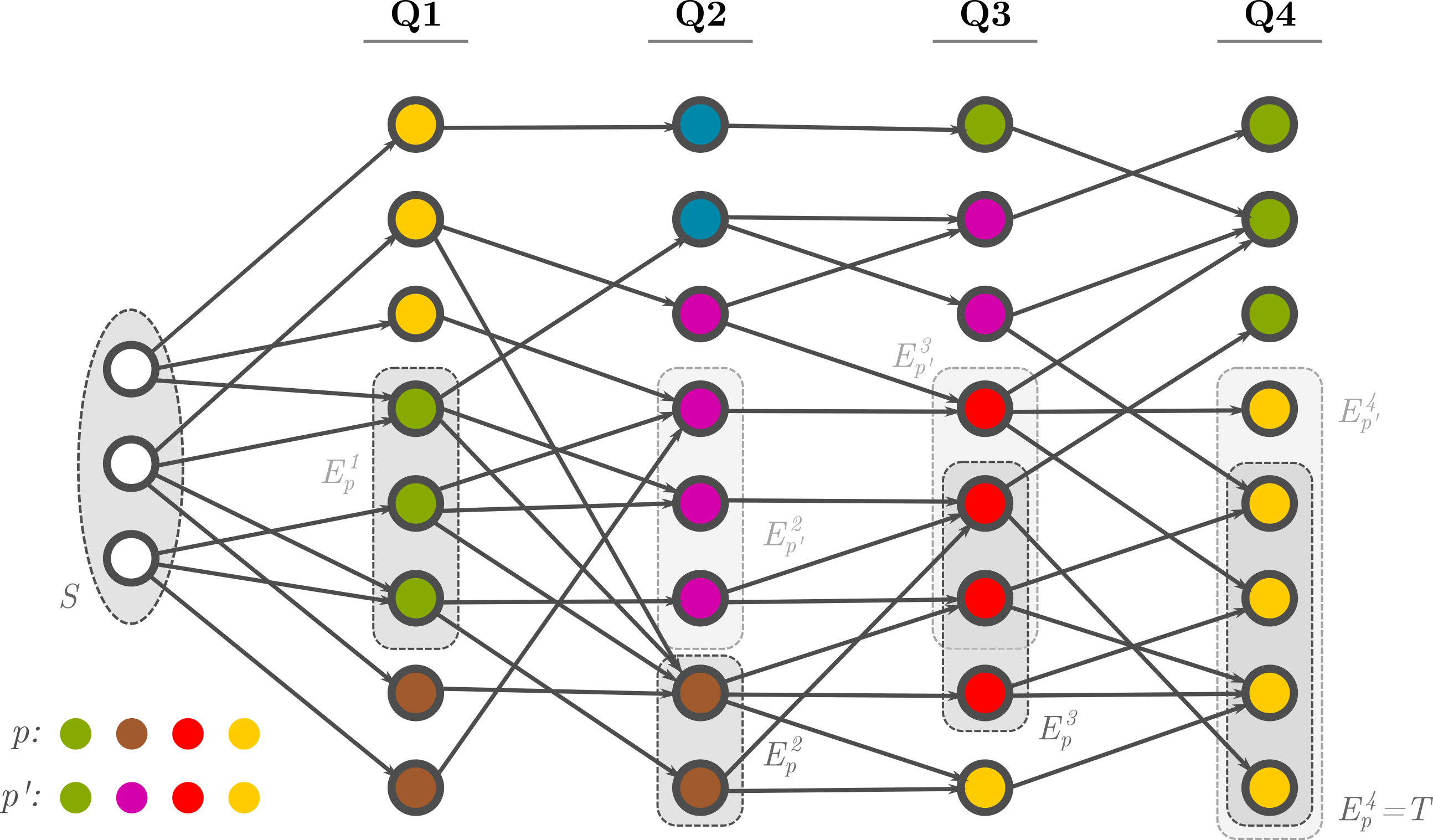}
    \caption{
        An example of the qualification program problem with multiple starting qualifications.
        An individual possessing three different qualifications (vertices in $S$) of the same focus/type (colour) seeks to attain any of the qualifications in $T$ such that they always do qualifications of the same type.
        Each of their qualifications, however, is a pre-requisite (directed edge) for a slightly different set of later qualifications, and it will take at least four steps to reach $T$ from any qualification in $S$.
        $p$ gives a program in which they first work towards green, then brown, then red, and finally yellow qualifications, and exactly the qualifications in $T$ are achieved.
        $p'$ gives a program \textit{green-purple-red-yellow}, in which there is some overlap with $T$ but an additional yellow qualification not in $T$ is achieved.
    }
    \label{figure:qualification_example}
\end{figure}

\begin{figure}[h!]
     \centering
     \begin{subfigure}{0.45\textwidth}
         \centering
         \includegraphics[width=\textwidth]{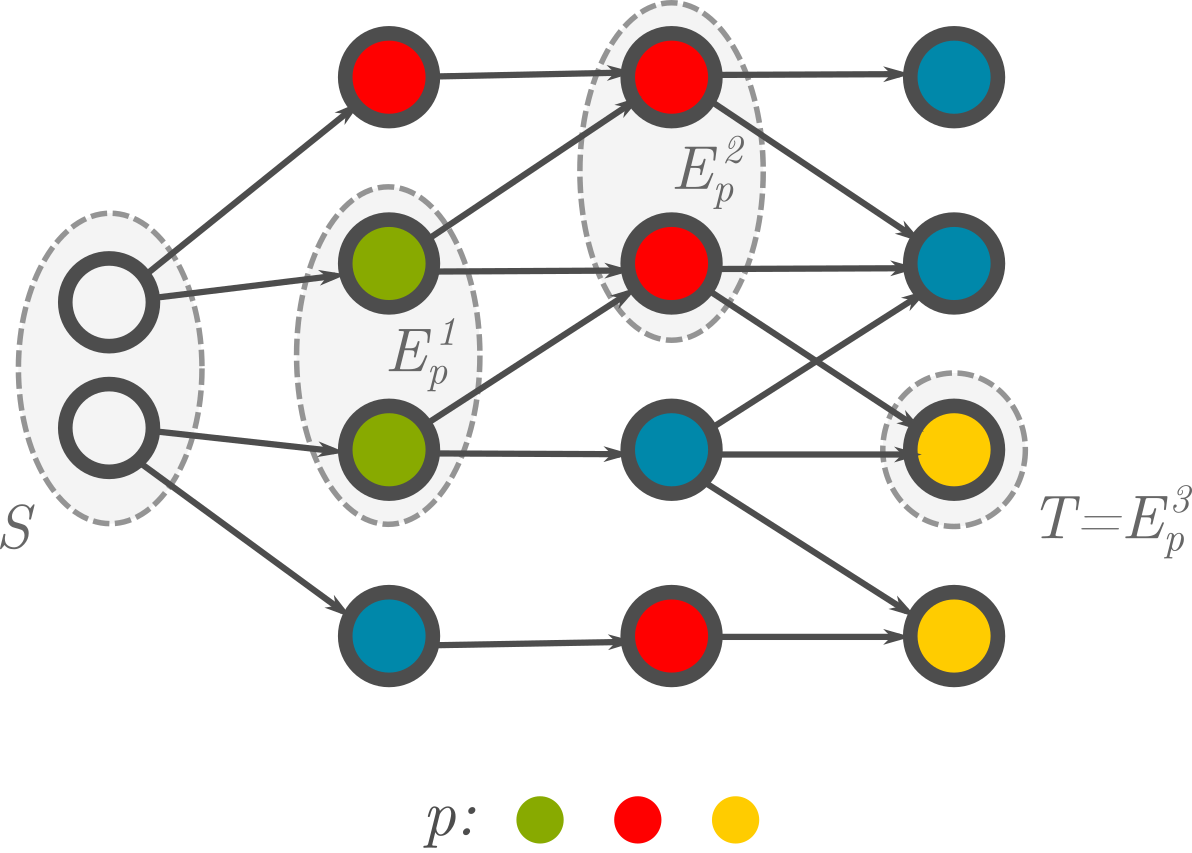}
         \vfill
     \end{subfigure}
     \hfill
     \begin{subfigure}{0.45\textwidth}
         \centering
         \includegraphics[width=\textwidth]{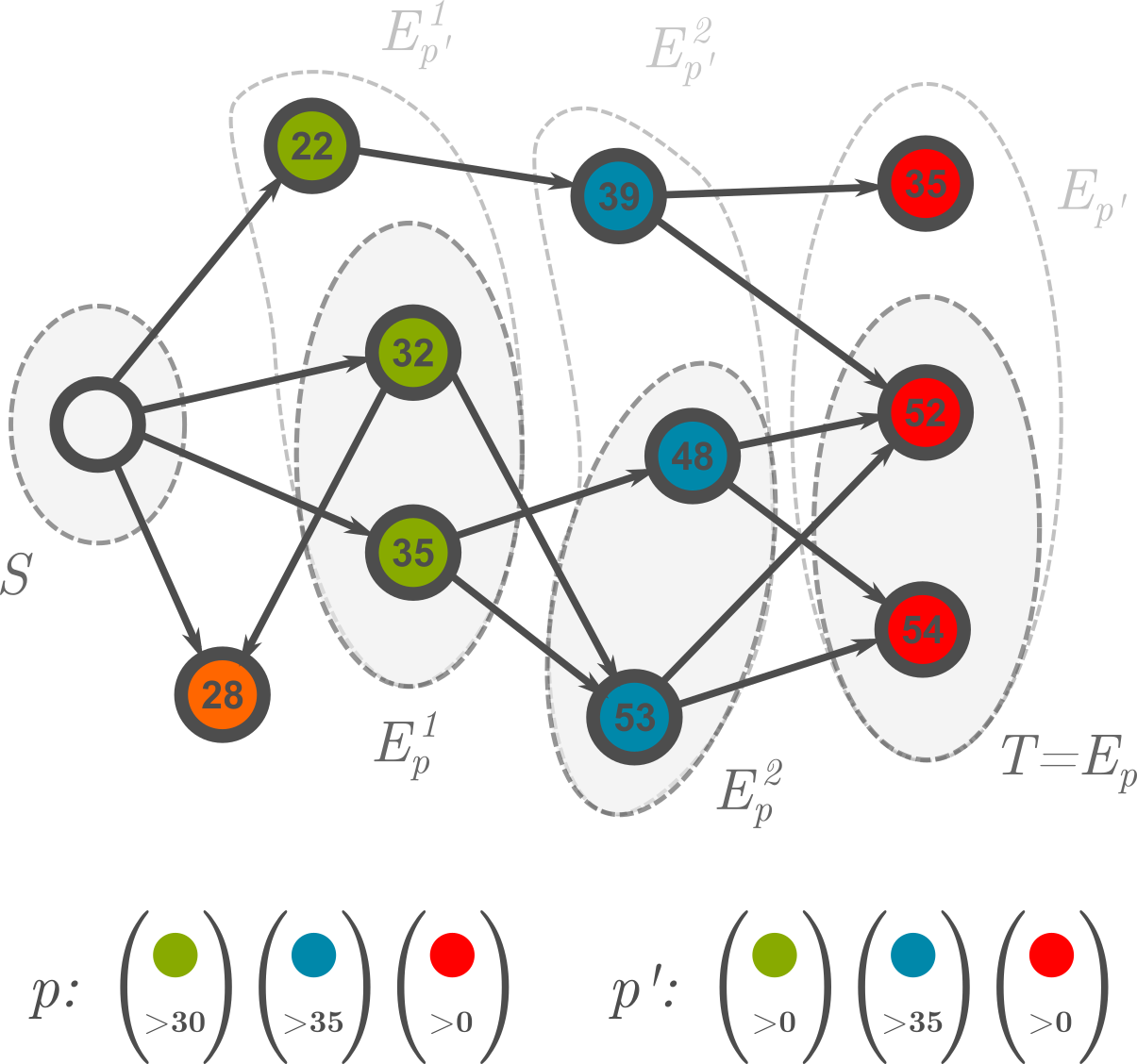}
     \end{subfigure}
    \caption{
        Two illustrative examples with solutions.
        Recall the objective to find a sequence of conditions on features that leads imagined agent from the vertices of $S$ to the vertices of $T$.
        \textit{Left.} The simple colour program $p$ can be used to instruct the agent starting at vertices of $S$ to proceed towards $T$. On the first step $E_p^1$ is reached.
        On the second step, the agent proceeds to the red nodes marked by $E_p^2$.
        On the third step, the agent proceeds to $T$.
        \textit{green-blue-yellow} would be another feasible program.
        \textit{Right.} Two simple toset (totally-ordered set) programs $p,p'$ are presented, conditioning on two feature dimensions of a general, unlayered graph: colour and integers.
        Agent starting in $S$ reaches exactly $T$ if following $p$ ($p$ is feasible and exact) but ends up at a strict superset $E_{p'}$ of $T$ if she follows $p'$ (hence $p'$ is infeasible).
    }
    \label{figure:simple_examples}
\end{figure}

Informally, let $G$ be a graph where every vertex has some features, and let $S,T$ be vertex sets.
We ask the following question: ``How is $S$ connected to $T$ in terms of the features of the vertices between them?''
Or, alternatively: ``What instructions should agents starting at the vertices of $S$ follow in order to reach $T$''?

Revisiting the example above, if $G$ is a map of qualifications, with $G$ being qualifications and directed edge $(v_1, v_2)$ denoting that $v_1$ is a pre-requisite qualification for obtaining $v_2$, one could iteratively ask ``what type of qualifications from among the qualifications I am eligible for now should I achieve to eventually reach my target qualifications $T$''?
A good answer would give a sequential list of characteristics of qualifications.
Of course, getting all possible qualifications at every stage would likely lead to obtaining qualifications in $T$, but ideally one would not be doing more than absolutely necessary.
We illustrate a variant of this example in which each qualification is only characterised by its type in \Cref{figure:qualification_example}.

We investigate this problem and aim to give answers in terms of lists of instructions for a single graph-walking agent that can be present at multiple vertices at the same time.
We dub such lists of instructions \textit{simple graph-walking programs}.

Let $G = (V, E)$ be a directed multi-graph, $\mathcal{F}_1,\dotsc, \mathcal{F}_z$ spaces of the features appearing in $G$, and $\mathcal{F} := \prod_{i} \mathcal{F}_i  \cup \{ \emptyset_i \}$ their product, where $\emptyset_i$ denotes that a given graph element is not assigned feature $i$.
Let $\phi_V: V \to \mathcal{F}, \phi_E: E \to \mathcal{F}$ be the vertex and edge feature mappings.
Let $p:c_1\cdots c_{2n}$ be a simple graph-walking program -- a list of vertex movement selection instructions, i.e. functions on $c_t: \mathcal{F} \to \left\{0,1\right\}$.
Consider an agent, located at $E_p^t$ for any time $t \geq 0$, that begins at the set of vertices $S = E_p^0$ and at each time $t \geq 1$ decides to proceed only to those vertices $v$ in out-neighbourhood of $E_p^{t-1}$ connected by edges $e$ whose feature vectors $\phi_V(v),\phi_E(e)$ satisfy $c_{2t-1}(\phi_E(e)) = 1 = c_{2t}(\phi_V(v))$.
The problem of \textit{simple graph-walking program mining} is the problem of finding lists of instructions $p$ such that an agent following it reaches $T$ by the end of the program ($\emptyset \neq E_p^n \subseteq T$).
See \Cref{figure:simple_examples}.
Alternatively and more in line with the program synthesis literature, we can speak of \textit{graph-walking program induction} or \textit{synthesis}, with the triple $(G,S,T)$ forming the inputs for the induction of the programs.
We call programs that satisfy $\emptyset \neq E_p^n \subseteq T$ \textit{feasible}, and programs that achieve the equality \textit{exact}.

The resulting programs are not frequent patterns, nor do they characterise the graphs locally; they characterize long-distance relationships between groups of vertices in $G$ in terms of $\mathcal{F}$.
Nevertheless, for a given pair of vertex sets $S,T$ there are often many feasible programs, and our algorithms carry some characteristics of a priori graph pattern mining.
We talk of ``simple'' programs because there are more elaborate program structures that could be studied in this setting (such as those that posses memory), and of ``deterministic'' programs since the instruction/criterion $c_i$ always gives either $1$ or $0$ as firm directions to the agent walking the graph.

The difficulty of this problem dwells in it being a cunning composition of two necessary sub-tasks -- \textit{path search} and \textit{classification} -- well-understood and studied in graph theory and machine learning, respectively.
This is because it is not enough to find a possible program walk from a vertex in $S$ to a vertex in $T$ -- one has to choose from all possible walks for all possible choices of the pair $(s,t) \in S \times T$, and then find a subset of these walks for which a single graph-walking program can be used.
In other words, it is necessary to both \textit{discover} possible walks, and \textit{discriminate} among them.

The mining algorithms we propose are \textit{correct} (they return only valid programs) and \textit{complete} (proceeding in stages, they always yield all valid programs up to some length $\ell$ before looking for longer programs).

This paper reviews related work (\Cref{section:relatedWork}), describes the problem of simple colour program mining, gives an algorithm for the task (\Cref{section:simpleColourPrograms}) -- which, to the best of our knowledge, has never been addressed in the literature before -- and extends simple colour program mining to simple totally-ordered-set programs (\Cref{section:simpleTosetPrograms}).

\section{Related Work}
\label{section:relatedWork}

Our effort lies at the intersection of two areas, namely graph program synthesis and analysis of complex networks.

Algorithmic program synthesis \cite{bodik2013algorithmic}, traditionally considered a problem in deductive theorem proving, has recently been looked at as a search problem with constraints such as a logical specification of the program behaviour \cite{feng2018program}, syntactic template \cite{alur2018search, desai2016program}, and, most recently, previously discovered program fragments and utility functions \cite{huang2021neural, ellis2021dreamcoder}.
Several new methods combine enumerative search with deduction, aiming to rule out infeasible sub-programs as soon as possible \cite{feng2017component, feser2015synthesizing}.
While relevant to us in their intent, the methods are domain-specific and do not extend to programs on graphs.

Under the paradigm of program search within a restricted graph context, Yaghmazadeh et al. \cite{yaghmazadeh2016synthesizing} study the synthesis of transformations on tree-structured data and employ a combination of SMT solving and decision tree learning.
Their synthesis system, \textsc{Hades}, outputs programs in a custom domain-specific language for tree-transformation.
Their approach considers entire graphs at the same time and while it does provide insights into construction of programs for graphs, it does not extend to the graph walk scenarios.

Wang et al. \cite{wang2017synthesizing} give a synthesis algorithm for queries $q$ on a set of tables $T_1,T_2,\dotsc,T_k$ that output records of a target table $T_\text{out}$.
Their approach to query-walk search is syntactic (in contrast to treating the database schema as a graph) and relies on simple enumeration of possible table visits, something our algorithms avoid with further constraints on the search space.

Mendelzon and Wood \cite{mendelzon1995finding} consider the problem of finding pairs of vertices in a graph connected by simple paths such that the trace of the labels of the vertices traversed satisfies a given regular expression.
While being perhaps closest to our work, their goal is to find paths that satisfy a constraint, rather than finding constraints for which connecting paths exist, thus having one degree of freedom fewer.

A sub-branch of graph pattern mining considers the special case of mining frequent paths  \cite{guha2009efficiently,gudes2005mining}.
We note that while the knowledge of frequent paths in a graph might potentially accelerate the search for solutions for the graph-walking program mining problems, methods for frequent path mining are of little direct use since we seek programs that go between particular sets $S,T$.

Finally, the literature on analysis of complex networks frequently focuses on characterisation of elements of networks in terms of their interactions with their neighbourhoods.
Among the examined characterisations are the notions of structural equivalence \cite{lorrain1971structural, breiger1975algorithm, burt1976positions}, regular equivalence \cite{white1983graph}, or other partitioning strategies \cite{winship1983roles}.
Further, random walks of graphs are frequently employed to help with analysis of graphs as whole \cite{page1999pagerank, cooper2016fast} or as sum of its communities \cite{andersen2006local, avrachenkov2013choice}, but to our knowledge, no work so far has investigated the identification of relationships between nodes related from beyond close neighbourhoods. 

\section{Simple Colour Programs}\label{section:simpleColourPrograms}

Without any loss of generality we restrict ourselves to program mining on simple directed graphs with featureless edges, to whom directed multi-graphs with edge features can be converted by replacing every edge with a vertex inheriting edge's features and retaining its endpoints as the only neighbours.

We focus on \textit{simple colour programs} -- programs $p: c_1 \cdots c_n$ such that $c_i$ is the colour of the out-neighbours of the vertices reached by the prefix program $c_1\cdots c_{i-1}$ whom the agent executing $p$ should proceed to at step $i$.
The program instructions (criteria) are thus colours.

\subsection{Preliminaries}
Let $G$ be a simple directed $k$-coloured graph. The colouring does not have to be proper.
Let $\emptyset \neq S, T \subseteq V$.
Denote by $\mathpzc{c}(v)$ the colour of vertex $v$, by $\mathpzc{c}(A)$ the set of colours of the vertices in $A \subseteq V$, and by $C_c(A)$ the set of vertices of $A$ with colour $c$.
Call set of vertices $A$ monochromatic if $\mathpzc{c}(A)$ is a singleton set.
Denote the out- and in-neighbours of $A$ by $N_o(A)$ and $N_i(A)$ respectively.
Shorten $n$ applications of $N$, i.e. $N_o(N...N(A)...))$, to $N^n_o(A)$, and similarly for $N_i$.
For convenience, define $N_o^0(A) := A =: N_i^0(A)$.

\begin{definition}[Simple Colour Program]
    $p:c_1 c_2 \cdots c_n$ is a simple colour program (SCP) of length $n$ iff $1 \leq c_i \leq k$ for all $1 \leq i \leq n$.
\end{definition}

Use $\epsilon$ for empty program -- the unique program of length $0$, $p_i$ for $c_i$, $p_{\leq i}$ for the prefix $c_1 \cdots c_i$ of $p$ for $1 \leq i \leq n$, and $p_{\geq i}$ for the suffix $c_i \cdots c_n$ of $p$.

\begin{definition}[Program Endpoints]
    For $p:c_1 c_2 \cdots c_n$ an SCP, define $E^{i}_p(S)$ for $0 \leq i \leq n$ recursively as follows:
    \begin{enumerate}
        \item $E^{0}_p(S) = S$,
        \item For $i>0$, $E^i_p(S) = C_{c_i}(N_o(E^{i-1}_p(S)))$,
    \end{enumerate}
    and denote $E^n_p(S)$ by $E_p(S)$.
\end{definition}

\begin{definition}[Feasible and Exact SCP]
    $p$ is feasible iff $\emptyset \neq E_p(S) \subseteq T$\label{definition:feasibleSCP}, and exact iff $E_p(S) = T$.
\end{definition}

\begin{definition}[Partial Halting]
    $p$ partially halts (on $G$) if there exists an $0 \leq i < n$ and $v$ such that $v \in E^i_p(S)$ but $c_{i+1} \not \in \mathpzc{c}(N_o(v))$. 
\end{definition}

In other words, $p$ partially halts if it ever reaches a vertex from which it is impossible to proceed while still following $p$.

\begin{lemma}
    If $p$ is a feasible program that does not partially halt, then for all $0 \leq i < n$ there exists a colour $c_{\text{to}}$ such that $\emptyset \neq C_{c_{\text{to}}}(N_o(E_p^i(S))) \subseteq N^{n-i+1}_i(T)$.
\end{lemma}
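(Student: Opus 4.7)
The plan is to take $c_{\text{to}} := c_{i+1}$, the $(i{+}1)$-st instruction of $p$. By the recursive definition of program endpoints,
$C_{c_{i+1}}(N_o(E^i_p(S))) = E^{i+1}_p(S)$,
so the task reduces to showing (a) that $E^{i+1}_p(S)$ is nonempty, and (b) that every vertex of $E^{i+1}_p(S)$ has a directed walk of length $n-i-1$ into $T$, i.e.\ lies in $N^{n-i-1}_i(T)$. (I read the statement's exponent $n-i+1$ as a small indexing typo: after step $i{+}1$ there are $n-i-1$ program symbols remaining, and since iterated $N_i$ is not monotone in the exponent, the natural right-hand side is $N^{n-i-1}_i(T)$.)

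For (a), I would prove by induction on $0 \le j \le n$ that $E^j_p(S) \neq \emptyset$. The base case $j = 0$ is the assumption $S \neq \emptyset$. For the inductive step with $j < n$, any $v \in E^j_p(S)$ admits, by the no-partial-halt hypothesis, an out-neighbour of colour $c_{j+1}$, which by definition lies in $E^{j+1}_p(S)$. Specialising at $j = i+1$ then gives the nonemptiness needed in (a).

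For (b), fix $v \in E^{i+1}_p(S)$ and construct a walk $v = u_0, u_1, \dots, u_{n-i-1}$ step by step. At stage $t$, given $u_t \in E^{i+1+t}_p(S)$ and the index $i+1+t < n$, the no-partial-halt hypothesis supplies some $u_{t+1} \in N_o(u_t)$ with $\mathpzc{c}(u_{t+1}) = c_{i+2+t}$, so $u_{t+1} \in E^{i+2+t}_p(S)$; iterating, the terminal vertex $u_{n-i-1}$ lies in $E^n_p(S) \subseteq T$ by feasibility. This witnesses a directed walk of length $n-i-1$ from $v$ to $T$, so $v \in N^{n-i-1}_i(T)$.

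The argument is essentially an unrolling of the definitions together with a single straightforward induction, so I do not anticipate a real obstacle. The one thing to keep straight is the bookkeeping of indices: the no-partial-halt hypothesis is exactly the certificate needed to extend every partial walk by one more vertex of the prescribed colour, and applying it $n-i-1$ times both seeds the nonemptiness in (a) and produces the reach-witness in (b).
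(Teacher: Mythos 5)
Your proposal is correct and takes essentially the same approach as the paper, whose entire proof is the one-line ``for each $i$, take $c_{\text{to}} = p_i$''; you simply supply the unrolling that the paper leaves implicit. You also rightly repair two indexing slips: the witness colour should be $c_{i+1}$ (not $p_i$), and the exponent in the containment should be $n-i-1$ rather than $n-i+1$, both of which your walk construction makes explicit.
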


\begin{proof}
    For each $i$, take $c_{to} = p_i$.
\end{proof}

\begin{definition}[Complete Halting]
    $p$ halts (completely) on $G$ if there exists an $1 \leq i < n$ such that $c_{i+1} \not \in \mathpzc{c}(N_o(E^i_p(S)))$.
    Equivalently, there is an $1 \leq i < n$ such that $E^{i+1}_p(S) = \emptyset$.
\end{definition}


\begin{lemma}
    Assume that a feasible SCP exists for $S,T$. Then \label{lemma:scpExistenceNecessaryConditions}
    \begin{enumerate}
        \item There exists a walk $w$ from $s \in S$ to $t \in T$ such that the colours of the vertices from $s$ to $t$ give a feasible program for $S,T$.\label{item:walkGivesProgram}
        \item If a program that does not partially halt exists, for every $s \in S$ there are $t \in T, w$ as in \cref{item:walkGivesProgram}.
        \item If an exact program exists, for every $t \in T$ there are $s \in S, w$ as in \cref{item:walkGivesProgram}.
    \end{enumerate}
\end{lemma}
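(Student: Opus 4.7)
The plan is to prove all three parts by unfolding the recursive definition of $E_p^i(S)$, with the only real distinction between the parts being whether we walk the recursion backwards from $T$ or forwards from $S$. Let $p: c_1 \cdots c_n$ be a feasible SCP whose existence is the standing hypothesis.

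For part (1), I would pick any witness $t \in E_p(S)$, which is nonempty by feasibility, and unwind the recursion backwards. By definition $E_p^n(S) = C_{c_n}(N_o(E_p^{n-1}(S)))$, so there exists $v_{n-1} \in E_p^{n-1}(S)$ with $t \in N_o(v_{n-1})$ and $\mathpzc{c}(t) = c_n$. Applying the same argument to $v_{n-1} \in C_{c_{n-1}}(N_o(E_p^{n-2}(S)))$ and iterating yields a chain $s = v_0, v_1, \ldots, v_{n-1}, v_n = t$ with $v_i \in E_p^i(S)$, $v_i \in N_o(v_{i-1})$, and $\mathpzc{c}(v_i) = c_i$ for $i \geq 1$. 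This is a walk from $s \in S$ to $t \in T$ whose vertex-colour trace past $s$ is exactly $c_1 \cdots c_n = p$, a feasible program by assumption.

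For part (2), the key observation is that non-partial-halting is exactly the forward extendability condition: for every $0 \leq i < n$ and every $v \in E_p^i(S)$, $c_{i+1} \in \mathpzc{c}(N_o(v))$, so one can always pick $v_{i+1} \in N_o(v) \cap C_{c_{i+1}}(V)$, which automatically lies in $E_p^{i+1}(S)$. Starting from an arbitrary $s \in S = E_p^0(S)$ and iterating this step $n$ times produces a walk $s, v_1, \ldots, v_n$ with $v_n \in E_p^n(S) \subseteq T$, and again its colour trace is $p$. Part (3) is an immediate combination: exactness gives $E_p(S) = T$, so every $t \in T$ is a valid witness for the backward construction of part (1).

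I do not foresee a genuine obstacle here — the statement is essentially a structural consequence of the way $E_p^i$ is defined — but the one point where care is required is not collapsing the three clauses into one. Feasibility alone only guarantees a witness $t \in E_p(S)$ and the corresponding backward walk from \emph{some} $s$; the universal statement over $S$ in clause (2) genuinely needs non-partial-halting to rule out starting vertices that get stranded midway, and the universal statement over $T$ in clause (3) genuinely needs exactness to ensure that every target is actually reached by $p$. Beyond keeping these dependencies straight and invoking the appropriate part of the definition of $E_p^i(S)$ at each step, the argument is purely mechanical.
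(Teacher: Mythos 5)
Your proof is correct and follows essentially the same route as the paper: part (1) by unwinding $E_p^i(S)$ backwards from a witness $t \in E_p(S)$, and part (3) by noting exactness makes every $t \in T$ such a witness. The only cosmetic difference is in part (2), where you build the walk forwards from $s$ directly, whereas the paper observes that non-partial-halting makes $p$ a feasible $\{s\},T$-program and re-invokes part (1); the two arguments are interchangeable.
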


\begin{proof}
    Let $p:c_1\cdots c_n$ be a feasible program.
   \begin{enumerate}
       \item Take any $w_n \in E_p(S)$. If $n=1$ we are done. If not, prepend it by any $w_{n-1} \in N_i(w_n) \cap E_p^{n-1}(S)$ which is non-empty as $p$ is feasible and therefore does not halt, and observe that $\mathpzc{c}(w_{n-1}) = p_{n-1}$.
       Repeat this process for a total of $n$ times. Then $w_{0} \in N_i(w_1) \cap E_p^{0}(S) \subseteq S$ and $w_1 \cdots w_n$ is a walk from a vertex in $S$ to a vertex in $T$ such that the colours of the vertices it visits give precisely the program $p$.
       \item If $p$ does not partially halt then for every $s \in S$, $E_p^i(s) \neq \emptyset$ and $E_p(s) \subseteq T$.
       So $p$ is a feasible $\{s\},T$-program, and hence item 1 applies.
       \item If $p$ is exact, $E_p(S) = T$ and the proof of item 1 also gives this stronger statement.
   \end{enumerate}
\end{proof}

\begin{definition}[Cover]
    For $A,B \subseteq V$ we say that the vertices $A$ cover $B$ by $c$ iff $C_c(N_o(A)) \supseteq B$.
\end{definition}

\begin{definition}[Injection]\label{definition:injection}
    For $\emptyset \neq A,B \subseteq V$ we say that the vertices $A$ inject $B$ by $c$ iff $\emptyset \neq C_c(N_o(A)) \subseteq B$.
    If that is the case, we call $A$ a $c$-injection into $B$.
\end{definition}

\begin{definition}[Spanning]\label{definition:spanning}
    We say that $A$ outspans $B$ by $c$ iff $C_c(N_o(A)) \backslash B \neq \emptyset$, and that $A$ spans $B$ by $c$ iff $A$ covers $B$ by $c$ but does not outspan $B$ by $c$.
\end{definition}

Notice that $A$ $c$-injects $B$ iff $A$ does not $c$-outspan $B$ and $A$ is not a $c$-halting point.

\begin{lemma}[Cover-Inject Behaviour of Intermediate Endpoints]\label{lemma:coveringButNotOutspanningInIntermediateEndpoints}
    For any program $p$ decomposed as $\pi c d \sigma$, $E_{\pi}(S)$ spans $E_{\pi c}(S)$ by $c$ but does not outspan $C_c(N_i(E_{\pi c d}(S)))$.
\end{lemma}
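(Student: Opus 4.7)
My plan is to unfold both halves of the lemma directly from the recursive definition of the endpoint sets, so that the whole argument reduces to a short definition-chase with essentially no combinatorial work.

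For the spanning half, I would invoke the defining identity $E_{\pi c}(S) = C_c(N_o(E_{\pi}(S)))$. The inclusion $C_c(N_o(E_{\pi}(S))) \supseteq E_{\pi c}(S)$ is the covering condition of \Cref{definition:spanning}, while the reverse inclusion $C_c(N_o(E_{\pi}(S))) \subseteq E_{\pi c}(S)$ rules out $c$-outspanning of $E_{\pi c}(S)$. Together, these say precisely that $E_{\pi}(S)$ spans $E_{\pi c}(S)$ by $c$.

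For the non-outspanning-of-$C_c(N_i(E_{\pi c d}(S)))$ half, I would pick an arbitrary $v \in C_c(N_o(E_{\pi}(S)))$ and show $v \in C_c(N_i(E_{\pi c d}(S)))$. The colour condition is already met because $v \in C_c(\cdot)$, so all that remains is to produce an out-neighbour of $v$ inside $E_{\pi c d}(S)$. Because $v \in E_{\pi c}(S)$ by the first half, and $E_{\pi c d}(S) = C_d(N_o(E_{\pi c}(S)))$ by definition, any $d$-coloured out-neighbour of $v$ lies automatically in $E_{\pi c d}(S)$ and witnesses $v \in N_i(E_{\pi c d}(S))$.

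The one real obstacle is the case in which $v$ has no $d$-coloured out-neighbour at all; such a $v$ is exactly a partial-halting vertex of $p$ at step $|\pi|+2$. I would handle this by reading the decomposition $\pi c d \sigma$ as implicitly restricted to the non-partial-halting regime on the $c \mapsto d$ transition -- or, equivalently, by observing that any such stalling $v$ contributes nothing to $E_{\pi c d}(S)$ and is therefore irrelevant to any downstream use of the lemma. With that caveat in place, the proof reduces to the one-line definition-chase above, so no deeper argument is required.
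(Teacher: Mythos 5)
Your argument is, at its core, the same definition-chase the paper itself uses: the spanning half is immediate from $E_{\pi c}(S) = C_c(N_o(E_{\pi}(S)))$, and the second half reduces to showing $E_{\pi c}(S) \subseteq N_i(E_{\pi c d}(S))$. Where you differ is that you have correctly put your finger on the one genuinely delicate step, which the paper's own proof glosses over: the paper asserts that $E_{\pi c d}(S) \subseteq N_o(E_{\pi c}(S))$ implies $C_c(N_o(E_{\pi}(S))) \subseteq C_c(N_i(E_{\pi c d}(S)))$, but that containment only guarantees that each vertex of $E_{\pi c d}(S)$ has \emph{some} in-neighbour in $E_{\pi c}(S)$ --- it does not give that \emph{every} vertex of $E_{\pi c}(S)$ has an out-neighbour in $E_{\pi c d}(S)$, which is what is actually needed. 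Your worry is not hypothetical: take $S=\{s\}$ with two $c$-coloured out-neighbours $v_1,v_2$, where $v_1$ has a $d$-coloured out-neighbour $w$ and $v_2$ has no out-neighbours. Then $E_{\pi}(S)=\{s\}$, $C_c(N_i(E_{\pi c d}(S)))=\{v_1\}$, and $C_c(N_o(\{s\}))\setminus\{v_1\}=\{v_2\}\neq\emptyset$, so $E_{\pi}(S)$ does $c$-outspan $C_c(N_i(E_{\pi c d}(S)))$. The lemma as stated therefore needs a no-partial-halting hypothesis at the $c\mapsto d$ transition (feasibility, which the paper's proof quietly assumes, is not enough, since a feasible program may still partially halt).

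Of your two proposed patches, only the first is sound: adding the hypothesis that $p$ does not partially halt at step $|\pi|+2$ makes your one-line argument go through. The fallback --- that a stalling vertex ``contributes nothing to $E_{\pi c d}(S)$ and is therefore irrelevant'' --- does not rescue the lemma as stated: the stalling vertex indeed contributes nothing to $E_{\pi c d}(S)$, but it \emph{is} an element of $C_c(N_o(E_{\pi}(S)))$ lying outside $C_c(N_i(E_{\pi c d}(S)))$, and its mere existence is exactly what makes $E_{\pi}(S)$ outspan that set under \Cref{definition:spanning}. Whether the downstream uses of the lemma survive is a separate question about the completeness proof of the mining algorithm, not a proof of this statement. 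So: same approach as the paper, but more careful --- keep the explicit non-partial-halting caveat and drop the second justification.
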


\begin{proof}
    Let $p:\pi c d \sigma$ be a feasible program.
    \\
    Since $E_{\pi c}(S) = C_c(N_o(E_{\pi}(S)))$, $E_{\pi}$ spans $E_{\pi c}$ by $c$.
    \\
    Further, since $E_{\pi c d}(S) \subseteq N_o(E_{\pi c}(S))$, $C_c(N_o(E_{\pi}(S))) \subseteq C_c(N_i(E_{\pi c d}(S)))$, so $E_{\pi}$ does not outspan $C_c(N_i(E_{\pi c d}(S)))$.
\end{proof}

\begin{proposition}
    \label{proposition:npnessOfSCPI}
    The problems of finding a feasible simple colour program and exact simple colour program are $\NP$.
\end{proposition}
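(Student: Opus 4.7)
The plan is a canonical guess-and-check argument for NP membership. The certificate is the program $p = c_1 c_2 \cdots c_n$ itself, encoded as a string over $\{1, \ldots, k\}$ of size $O(n\log k)$. The verifier unfurls the recursive definition of $E_p^i$ from the statement of \Cref{definition:feasibleSCP}: starting from $E_p^0 := S$, it iteratively computes $E_p^i := C_{c_i}(N_o(E_p^{i-1}))$ by scanning the out-edges of the vertices in $E_p^{i-1}$ and retaining only those endpoints whose colour matches $c_i$. Each iteration costs $O(|V|+|E|)$ time, so the full pass costs $O(n\cdot(|V|+|E|))$. After the last step the verifier accepts for the feasibility variant iff $\emptyset \neq E_p^n \subseteq T$, and for the exactness variant iff $E_p^n = T$; both checks are linear in $|V|$.

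For the guess-and-check to genuinely witness NP membership, the certificate must be polynomially bounded, which reduces to bounding $n$. This is the one substantive point, and is the step I would give most care to. The lever is that $E_p^i$ is a deterministic function of the pair $(E_p^{i-1}, c_i)$, so a \emph{minimum-length} feasible program cannot revisit any intermediate endpoint set: if $E_p^i = E_p^j$ with $i<j$, then the instructions $c_{i+1}\cdots c_j$ can be excised without altering $E_p^n$, contradicting minimality. This bounds $n$ by the number of distinct reachable endpoint sets, and additionally one may exploit the fact that $E_p^i$ is monochromatic for $i\geq 1$ (so it lies inside a single colour class) to sharpen the count.

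With the length bound in hand, the certificate is polynomial in the input size and the verifier above runs in polynomial time, placing both the feasibility and exactness decision problems in $\NP$. In the spirit of the bounded-length enumerative algorithms developed later in this section, the same conclusion follows immediately under the convention that the target program length $n$ is part of the input in unary, in which case the length-bound step is not needed at all.
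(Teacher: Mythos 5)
Your verifier coincides with the paper's: simulate the endpoint sets via $E_p^0 = S$ and $E_p^i = C_{c_i}(N_o(E_p^{i-1}))$, then test $\emptyset \neq E_p^n \subseteq T$ (resp.\ $E_p^n = T$). The paper's proof is exactly this simulation and nothing more; it silently treats the candidate program $p$ as part of the input, so it never confronts the certificate-length question you raise.

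You are right that bounding $n$ is the one substantive point, but your proposed bound does not work. The excision argument correctly shows that a minimum-length feasible program has pairwise distinct intermediate endpoint sets, yet the number of distinct subsets of $V$ --- even monochromatic ones --- is exponential in $|V|$, so the pigeonhole yields only $n \leq 2^{|V|}$, not a polynomial bound. This is not something the monochromaticity refinement can repair: the situation is analogous to shortest synchronizing or distinguishing words under the subset construction for NFAs, where minimal witnesses can genuinely be exponentially long, so one should not expect a polynomial length bound to hold for free. Consequently the sentence ``with the length bound in hand, the certificate is polynomial'' is unsupported as written. Your closing fallback --- taking the target length $n$ in unary as part of the input, or equivalently taking the candidate program itself as input, as the paper implicitly does --- is the only version of the claim that your argument (or the paper's) actually establishes; the proposition should be read under that convention, and the pigeonhole paragraph should either be deleted or honestly labelled as giving only an exponential bound.
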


\begin{proof}
    Let $G,S,T$ and a candidate simple colour program $p$ be inputs.

    To verify the the certificate $p$ one can simulate the actions of a set graph-walking agent.
    Starting at $S = E_p^0(S)$, the agent visits vertices $N_o(E_p^0(S)) \subseteq V$ and compare their colour to $c_1$.
    Searching for edges originating at a vertex, searching for vertex colour, and comparing vertex colours to $c_i$ is a polynomial-time operation.
    There are always at most $\left|V\right|$ vertices whose out-neighbours must be visited, and this operation is repeated for $1 \leq i \leq n$.
    Hence the verification of the certificate is a polynomial-time operation.
\end{proof}

\subsection{Viable Injection Basis Enumeration}
We present a simple colour program mining algorithm
constructing candidate programs from space of possibilities reduced by considering only those injections that cover ``enough'' vertices to have hope of reaching $T$.
This is captured by the notion of \textit{pseudo-basis}.

\begin{definition}[Basis]\label{definition:basis}
    We say that $\mathcal{B}$ is a $c$-basis for $B$ iff $\mathcal{B}$ spans $B$ by $c$ and $\mathcal{B}$ is a minimal such set, i.e. for any $v \in \mathcal{B}$, $\mathcal{B}-v$ does not span $B$ by $c$.
\end{definition}

\begin{lemma}\label{lemma:reductionLemma}
    $A$ is a $c$-spanning set for $B$ iff it contains a $c$-basis for $B$ and does not $c$-outspan $B$.
\end{lemma}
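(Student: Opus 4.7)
The plan is to unpack all three notions (cover, outspan, span) in terms of the single set $C_c(N_o(A))$ and then argue each direction separately; the equivalence is really just set inclusion in two directions, so the main work is the forward direction where we have to actually produce a basis inside $A$.

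For the forward direction, suppose $A$ spans $B$ by $c$, i.e.\ $C_c(N_o(A)) = B$. First I would observe that $A$ does not outspan $B$ — that is half of the definition of spanning, so it comes for free. To extract a basis, I would run a straightforward greedy deletion: start with $A_0 := A$ and, at each step, if there is some $v \in A_i$ such that $A_i \setminus \{v\}$ still spans $B$ by $c$, set $A_{i+1} := A_i \setminus \{v\}$; otherwise stop. This terminates because $A$ is finite, and the terminal set $\mathcal{B} \subseteq A$ spans $B$ by $c$ and is minimal with this property, hence is a $c$-basis for $B$ (\Cref{definition:basis}). One monotonicity remark is worth making explicitly here: removing vertices can only shrink $N_o(\cdot)$ and therefore shrink $C_c(N_o(\cdot))$, so deletions can never turn a non-outspanning set into an outspanning one; that is why we only need to check that covering is preserved at each deletion step.

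For the backward direction, suppose $A \supseteq \mathcal{B}$ for some $c$-basis $\mathcal{B}$ of $B$ and $A$ does not outspan $B$ by $c$. From $\mathcal{B} \subseteq A$ I get $N_o(\mathcal{B}) \subseteq N_o(A)$, and hence $C_c(N_o(\mathcal{B})) \subseteq C_c(N_o(A))$. Since $\mathcal{B}$ spans $B$ by $c$, $C_c(N_o(\mathcal{B})) = B$, so $B \subseteq C_c(N_o(A))$, i.e.\ $A$ covers $B$ by $c$. Combined with the non-outspanning hypothesis $C_c(N_o(A)) \subseteq B$, this yields $C_c(N_o(A)) = B$, so $A$ spans $B$ by $c$ as required.

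There is no real obstacle here; the only care needed is to keep the two inclusions $B \subseteq C_c(N_o(A))$ (from the contained basis) and $C_c(N_o(A)) \subseteq B$ (from non-outspanning) straight, and to verify that the greedy deletion in the forward direction genuinely reaches a minimal spanning subset rather than breaking spanning prematurely — which it cannot, since by construction we only perform a deletion when spanning is preserved.
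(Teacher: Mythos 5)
Your proof is correct and follows essentially the same route as the paper's: the backward direction is the same monotonicity-of-$C_c(N_o(\cdot))$ argument, and your greedy deletion in the forward direction is exactly the paper's ``remove vertices until none can be removed without making it a non-spanning set,'' just spelled out with the (correct) observation that deletions preserve non-outspanning so only covering needs checking.
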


\begin{proof}
    Let $A$ be a $c$-spanning set for $B$.
    Then it has a basis (remove vertices until none can be removed without making it a non-spanning set) and by definition does not outspan $B$.
    
    Conversely, let $A$ be a set that contains a $c$-basis $\mathcal{B}$ but does not outspan $B$.
    Since $B = C_c(N_o(\mathcal{B}) \subseteq C_c(N_o(A))$, $A$ covers $B$.
    Hence $A$ $c$-spans $B$.
\end{proof}

\begin{definition}[$c$-Pseudo-Basis]\label{definition:cpseudoBasis}
    We say that $\mathcal{B}$ is a $c$-pseudo-basis for $(B,M)$ iff $\mathcal{B}$ $c$-covers $B$, $\mathcal{B}$ $c$-injects $M$, and $\mathcal{B}$ is a minimal such set, i.e. for any $v \in \mathcal{B}$, $\mathcal{B}-v$ does not $c$-cover $B$.
\end{definition}

\begin{remark}
    Notice $\mathcal{B}$ is a $c$-basis for $B$ if it is a $c$-pseudo basis for $(B,B)$.
\end{remark}

The utility of pseudo-bases comes from being a type of injection into $M$ that covers all vertices of the out-neighbourhood designated as essential ($B$).
This allows us to remove from our search space those injections that do not cover sufficiently many of its out-neighbours to fully reach $T$.
More specifically, our strategy is to
\begin{enumerate}
    \item consider all bases $\mathcal{B}_\bullet^1$ for $T$,
    \item consider all sets $\mathcal{B}_\bullet^2$ covering each $\mathcal{B}_\bullet^1$ but not outspanning the corresponding monochromatic in-neighbourhoods $M_\bullet^1$ of $T$, i.e. the pseudo-bases for each $(\mathcal{B}_\bullet^1, M_\bullet^1)$,
    \item do the same for pairings $(\mathcal{B}_\bullet^i, M_\bullet^i)$, $i \geq 2$.
    If the candidate pseudo-basis $\mathcal{B}_j^i$ lies in $S$ and $S$ is in turn fully contained in the in-neighbourhood of the appropriate $c$-span of $\mathcal{B}_j^i$, a valid program has been found.
\end{enumerate}

\noindent
In \Cref{alg:vibe}, let $Q$ and $Q_{\text{next}}$ be queues of triples drawn from \textit{programs $\times$ vertex-sets $\times$ monochromatic vertex-sets}, $P$ be a set of \textit{programs}.
Each triple $(p, B, M)$ in $Q,Q_{\text{next}}$ represents a candidate program $p$, from where to begin $B$ in order to reach $T$, and a monochromatic in-neighbourhood $M \subseteq N_i(E_{p_{\leq i}}(B))$.

\begin{algorithm}[h!]
\caption{(VIBE) Viable Injection Basis Enumeration}\label{alg:vibe}
\KwInit{$Q_{\text{next}}$ contains only $(\epsilon, T, T)$, $Q$ and $P$ are empty}

\ForEach{$\ell \geq 0$ such that $T \subseteq N_o^{\ell}(S)$\label{line:vibeForwardPass}}{
    $Q \gets Q_{\text{next}}$\;
    empty $Q_{\text{next}}$\;
    \While{$Q$ is not empty\label{line:vibeBackwardPass}}{
        pop $(p, B, M)$ from $Q$\;
        $n \gets $ length of $p$\;
        \;
        \If{$n=\ell$}{
            \If{$B \subseteq S \subseteq N_i(E_{p_{\leq 1}}(B))$\label{line:vibeIfIsFeasible}}{
                add $p$ to $P$\label{line:vibeAddFeasibleProgram}\;
            }
            push $(p, B, M)$ into $Q_{\text{next}}$\;
            \Continue
        }
        \;
        \ForEach{$c \in \mathpzc{c}\left( B \right)$\label{line:vibeSearchAllValidExtensionsStart}\label{line:vibeColoursToConsider}}{
            $N \gets N_o^{\ell-n-1}(S) \cap N_i(B)$\label{line:declareNeighboursToConsiderFromForwardPass}\;
            \ForEach{$d \in \mathpzc{c}(N)$\label{line:vibeSearchAllValidMonochromaticPresetsStart}}{
                \eIf{$\ell \neq n+1$\label{line:vibeDeclareColouredNeighbourhood}}{
                    $N_d \gets C_d(N)$\;
                }{
                    $N_d \gets N$\;
                }
                \ForEach{$c$-pseudo-basis $\mathcal{B} \subseteq N_d$ for $(B, M)$\label{line:vibeSearchAllValidInjectionsStart}}{
                    push $(cp, \mathcal{B}, N_d)$ into $Q$\label{line:vibeAddExtendedPossibleProgram}
                }\label{line:vibeSearchAllValidMonochromaticPresetsEnd}
            }\label{line:vibeSearchAllValidInjectionsEnd}
        }\label{line:vibeSearchAllValidExtensionsEnd}
    }\label{line:vibeCombinatorialLoopEnd}
}
\end{algorithm}

\begin{proposition}
    The following hold.\label{proposition:characteristicsOfVIBE}
    \begin{enumerate}
        \item \Cref{alg:vibe} is correct in the sense that all programs in $P$ are exact programs for $S,T$.
        \item \Cref{alg:vibe} is complete in the sense that whenever execution exists the loop closing at \cref{line:vibeCombinatorialLoopEnd}, $P$ contains all exact programs for $S,T$ of length $\ell$.
    \end{enumerate}
\end{proposition}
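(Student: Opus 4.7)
The plan is to prove correctness and completeness separately, both by induction on the length of the program prefix carried by the triple $(p, B, M)$, exploiting the fact that the algorithm constructs programs backwards from the seed $(\epsilon, T, T)$ and prepends one instruction per extension through a restricted pseudo-basis search.

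For correctness, I would maintain the invariant $\mathcal{I}(p, B, M)$: every triple ever pushed onto $Q$ or $Q_{\text{next}}$ satisfies $E_p(B) = T$, and this equality extends to $E_p(X) = T$ for every $X$ with $B \subseteq X$ whose $c$-coloured out-neighbourhood stays within $M$ (where $c = p_{\leq 1}$). The base case $(\epsilon, T, T)$ is immediate since $E_\epsilon(T) = T$. For the inductive step, when $(cp, \mathcal{B}, N_d)$ is pushed from $(p, B, M)$ with $\mathcal{B}$ a $c$-pseudo-basis for $(B, M)$, \Cref{definition:cpseudoBasis} gives $B \subseteq C_c(N_o(\mathcal{B})) \subseteq M$, so $E_c(\mathcal{B}) = C_c(N_o(\mathcal{B}))$ is an admissible intermediate set and the hypothesis yields $E_{cp}(\mathcal{B}) = E_p(E_c(\mathcal{B})) = T$. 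When the acceptance check $B \subseteq S \subseteq N_i(E_{p_{\leq 1}}(B))$ on line \ref{line:vibeIfIsFeasible} fires, I would argue that $S$ is itself an admissible $X$ under $\mathcal{I}$, giving $E_p(S) = T$.

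For completeness, fix any exact SCP $p = c_1 \cdots c_\ell$ for $S,T$ and let $E^i := E^i_p(S)$, with $E^\ell = T$. I would argue by reverse induction on $i$ from $\ell$ down to $0$ that during the $\ell$-th outer iteration the algorithm pushes onto $Q$ some triple $(c_{i+1} \cdots c_\ell, B_i, M_i)$ with $B_i \subseteq E^i$ and $B_i$ a $c_{i+1}$-pseudo-basis for $(B_{i+1}, M_{i+1})$. The pivotal tool is \Cref{lemma:coveringButNotOutspanningInIntermediateEndpoints}: $E^i$ both $c_{i+1}$-covers $E^{i+1}$ and does not outspan the appropriate $c_{i+1}$-coloured in-neighbourhood of $E^{i+2}$, which matches exactly the pseudo-basis conditions the inner loop enumerates. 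The forward-pass guard $N \gets N_o^{\ell - n - 1}(S) \cap N_i(B_{i+1})$ cannot exclude a candidate $B_i \subseteq E^i$ because $E^i \subseteq N_o^i(S)$ by construction and $E^i \subseteq N_i(E^{i+1})$ since $E^{i+1}$ is the $c_{i+1}$-coloured out-neighbourhood of $E^i$. Unwinding to $i = 0$, the triple has length $\ell$, $B_0 \subseteq S$, and $S \subseteq N_i(E_{c_1}(B_0))$ follows from $p$'s exactness on $S$, so the check on line \ref{line:vibeIfIsFeasible} passes and $p$ is added to $P$.

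The hard part will be the correctness inductive step -- specifically, establishing that $\mathcal{I}$ propagates cleanly from $B$ to any admissible $X$ compatible with the pseudo-basis construction and the forward-pass guard, and in particular to the $X = S$ that triggers the acceptance check. The bulk of the technical work lies in showing that the monochromatic restriction $N_d = C_d(N)$ together with the in-neighbourhood filter $N_i(B)$ forbids an admissible $X$ from having $c$-coloured out-neighbours outside $M$, since this is exactly what makes $E_c(X)$ sit between $B$ and $M$ and therefore keeps the hypothesis applicable on the shorter suffix. Completeness, by contrast, is comparatively direct once \Cref{lemma:coveringButNotOutspanningInIntermediateEndpoints} is invoked at each backward reduction.
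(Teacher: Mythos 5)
Your overall architecture coincides with the paper's: correctness via an invariant on the triples $(p,B,M)$ established by induction from the seed $(\epsilon,T,T)$ through the pseudo-basis extension step, and completeness via reverse induction over the prefix--suffix decompositions of an exact program, with \Cref{lemma:coveringButNotOutspanningInIntermediateEndpoints} supplying the pseudo-basis at each backward step and a check that the filter $N_o^{\ell-n-1}(S)\cap N_i(B)$ cannot exclude it. The completeness half of your sketch matches the paper's essentially step for step.

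There is, however, a genuine gap in the correctness half, and it sits exactly where you flagged ``the hard part.'' First, a smaller issue: you state the invariant as ``$E_p(X)=T$ for every $X\supseteq B$ whose $c$-coloured out-neighbourhood stays within $M$,'' but in the inductive step you actually use the paper's version, ``$E_p(A)=T$ for every $B\subseteq A\subseteq M$'' (you verify $B\subseteq E_c(\mathcal{B})\subseteq M$ and feed $E_c(\mathcal{B})$ to the hypothesis for the shorter suffix), so the formulation you wrote down is not the one you apply. Second and more importantly, the acceptance test on \cref{line:vibeIfIsFeasible} checks $B\subseteq S\subseteq N_i(E_{p_{\leq 1}}(B))$, which places $S$ inside an in-neighbourhood one level \emph{above} $M$, not inside $M$ itself; so $S$ is not an ``admissible $X$'' under either version of the invariant, and you cannot invoke $\mathcal{I}$ directly at $X=S$. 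The paper bridges this differently: from $B\subseteq S$ and $S\subseteq N_i(E_{p_{\leq 1}}(B))$ it concludes $E_{p_{\leq 1}}(S)=E_{p_{\leq 1}}(B)$ and then chains $E_p(S)=E_{p_{>1}}(E_{p_{\leq 1}}(S))=E_{p_{>1}}(E_{p_{\leq 1}}(B))=E_p(B)=T$, i.e.\ it applies the already-established identity $E_p(B)=T$ rather than admissibility of $S$. Some such bridge from the literal acceptance condition to the invariant is required; as written, your plan assumes it away.
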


\begin{proof}
    Observe that for every $(p,B,M)$ in $Q$, $B \subseteq M$, $M$ is a monochromatic set, and the spans of $B$ and $M$ are the same.
    \begin{enumerate}
        \item First, we show inductively that every triple $(p,B,M)$ in $Q$ is such that for any set $B \subseteq A \subseteq M$, $E_p(A) = T$.
        
        The base case (stemming from the initialisation of $Q_{\text{next}}$) is straightforward as $E_\epsilon(T) = T$.
        Suppose $p \neq \epsilon$.
        Then there is a colour $c$ and a shorter program $q$ such that $p = cq$, as reaching \cref{line:vibeAddExtendedPossibleProgram} is the only way for a non-empty program to enter $Q$.
        
        So there is a triple $(q, B', M')$ and such that $B$ is a pseudo-basis for $(B', M')$ (cf. \cref{line:vibeSearchAllValidInjectionsStart}).
        Thus $B' \subseteq E_c(B) \subseteq M'$ by \Cref{definition:cpseudoBasis}.
        But $E_p(B) = E_{q}(E_c(B))$, so by the inductive hypothesis $E_p(B) = T$.
        
        Now, every program $p$ in $P$ must have been added on \cref{line:vibeAddFeasibleProgram}, so necessarily there is a triple $(p,B,M)$ that was once in $Q$ s.t. $B \subseteq S \subseteq N_i(E_{p_{\leq 1}}(B))$.
        Since $S \subseteq N_i(E_{p_{\leq 1}}(B))$ we have $E_{p_{\leq 1}}(S) = E_{p_{\leq 1}}(B)$, so
        $$
            E_p(S) = E_{p_{>1}}(E_{p_{\leq 1}}(S)) = E_{p_{>1}}(E_{p_{\leq 1}}(B)) = E_p(B) = T.
        $$
        
        \item
        Let $p$ be an exact program for $S,T$.
        
        Focusing on the combinatorial loop of \cref{line:vibeBackwardPass} and ignoring the caching by $Q_{\text{next}}$, we shall show inductively that for every prefix-suffix decomposition $\pi \sigma = p$ of p there is a triple $(\sigma, B, M)$ s.t. $B \subseteq E_\pi(S) \subseteq M$ that appears on $Q$.
        
        For the base case with suffix $\sigma = \epsilon$, $p$ is an exact program, so $T \subseteq E_p(S) \subseteq T$. This is the triple $(\epsilon, T, T)$ found in initialisation.
        
        Assume the inductive hypothesis holds for shorter suffixes and decompose $p$ to $\pi' c \sigma$.
        Since by the inductive hypothesis there is a triple $(\sigma, B, M)$ s.t. $B \subseteq E_{\pi'c}(S) \subseteq M$, $c \in \mathpzc{c}(B)$ and $E_{\pi'}(S) \subseteq N \neq \emptyset$ on \cref{line:declareNeighboursToConsiderFromForwardPass}.\
        Notice also that $E_{\pi'}$ is further monochromatic whenever $\pi' \neq \epsilon$, so by the branching on \cref{line:vibeDeclareColouredNeighbourhood} $E_{\pi'} \subseteq N_d$.
        Further, as a consequence of \Cref{lemma:coveringButNotOutspanningInIntermediateEndpoints}, $E_{\pi'}(S)$ covers $E_{\pi' c}$ but does not outspan $M$, so $E_{\pi'}(S)$ is a $c$-pseudo-basis for $(B, M)$, proving the inductive hypothesis.
        
        Now, whenever $\ell = n$ execution will reach \cref{line:vibeIfIsFeasible} with various triples $(p, B, M)$.
        Decompose $p = \epsilon p$.
        Then by the hereproven induction one of them will be such that $B \subseteq E_\epsilon(S) = S \subseteq M$, and by the aboveproven induction also $E_p(S) = T$.
        So $p$ will be added to $P$ on \cref{line:vibeAddFeasibleProgram}.
        This completes the proof of completeness.
    \end{enumerate}
\end{proof}

\Cref{alg:vibe} can be easily modified to also yield feasible programs.
This can be done by altering \cref{line:vibeSearchAllValidInjectionsStart} to give $c$-injections into $T$ if $n=0$, and execute the present behaviour otherwise.
Alternatively and equivalently, one can just pre-compute all viable $c$-injections, their monochromatic peers, and initialize $Q_{\text{next}}$ to their set in arbitrary order.

The completeness of \Cref{alg:vibe} combined with \Cref{lemma:coveringButNotOutspanningInIntermediateEndpoints} highlight the role of existence of appropriate pseudo-basis as a necessary and sufficient condition for local feasible program existence.

\section{Simple Toset Programs}
\label{section:simpleTosetPrograms}

Extending on algorithms of \Cref{section:simpleColourPrograms} we now consider a more general setting where there are multiple features at each vertex, and the feature spaces admit a total order.
See \Cref{figure:simple_examples}{-}\textit{Right.} for an example.

\begin{definition}[Criterion]
    \label{definition:criterion}
    Let $c$ be a triple $(f, \omega, \nu)$ where $f$ is a feature, $\omega$ is one of the operators $<,\leq,=,\geq, >$, and $\nu \in \mathcal{F}_f$. Then $c$ is an atomic criterion.
    Inductively, $c$ is a criterion if it is either an atomic criterion, a conjunction of criteria, or a disjunction of criteria.
\end{definition}

\begin{definition}[Simple Toset Program]
    We say that $p: c_1 c_2 \cdots c_n$ is a Simple Toset Program (STP) if each $c_i$ is a criterion.
\end{definition}

\begin{definition}[Criterion Satisfaction]
    We say that $v \in V$ satisfies the atomic criterion $c = (f, \omega, \nu)$ iff $\phi_V(v) \omega \nu$. We then re-define $C_c(A)$ in the context of STPs to mean the set of all vertices in $A$ that satisfy $c$.
    If $c$ is a criterion, we say that $v \in V$ satisfies $c$ iff
    \begin{itemize}
        \item $c$ is an atomic criterion and $v$ satisfies $c$ in the sense for atomic criteria, or
        \item $c$ is a disjunction of criteria $c_1 \lor \dots \lor c_k$ and $v$ satisfies at least one of $c_1,\dotsc,c_k$, or
        \item $c$ is a conjunction of criteria $c_1 \land \dots \land c_k$ and $v$ satisfies all of $c_1,\dotsc,c_k$.
    \end{itemize}
\end{definition}

All of the previous notions such as endpoints $E_p(\cdot)$, program feasibility, or program exactness, can be readily carried over from SCPs to STPs.

\begin{proposition}
    \label{proposition:npnessOfSTPI}
    The problems of finding a feasible simple toset program and exact simple toset program are $\NP$.
\end{proposition}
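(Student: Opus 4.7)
The plan is to mirror the proof of \Cref{proposition:npnessOfSCPI} and exhibit a polynomial-time verifier that takes the triple $(G, S, T)$ together with a candidate program $p:c_1 c_2 \cdots c_n$ as inputs. The certificate is the program $p$ itself, and what we need to show is (i) it suffices for a verifier to have polynomial-size certificate to exist, and (ii) the verifier runs in time polynomial in the combined size of $G$, $S$, $T$ and $p$.

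First I would argue that evaluating a single criterion $c_i$ on a single vertex $v$ takes time polynomial in $|c_i|$ and the feature description size. For an atomic criterion $(f,\omega,\nu)$ this is a single comparison in the totally-ordered space $\mathcal{F}_f$, and by \Cref{definition:criterion} a general criterion is a Boolean combination of atomic criteria, so satisfaction can be decided by structural recursion on $c_i$ in time proportional to $|c_i|$ times the cost of a single feature comparison. Consequently, given a set $A \subseteq V$, the ``filter by criterion'' operation producing $C_{c_i}(A)$ takes time polynomial in $|V|$ and $|c_i|$.

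Next I would imitate the SCP argument verbatim: starting from $E_p^0(S) = S$, the verifier iteratively computes the out-neighbourhood $N_o(E_p^{i-1}(S))$ (polynomial in the size of $G$) and then restricts to those vertices satisfying $c_i$ (polynomial by the previous paragraph). After $n$ such rounds it obtains $E_p(S)$ and checks either $\emptyset \neq E_p(S) \subseteq T$ (for feasibility) or $E_p(S) = T$ (for exactness), both of which are polynomial set operations. Since each round is polynomial and there are $n$ of them, total verification time is polynomial in $|G|+|S|+|T|+|p|$.

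The only subtle point, and thus the mildest obstacle, is ruling out that a usefully encoded certificate need be exponentially long. For this I would note that it suffices to work with an encoding of each $c_i$ in, say, disjunctive normal form over atomic criteria, and to observe that from the witness whose existence is being posited one can always exhibit an equivalent criterion of size bounded by a polynomial in $|V|$ (the set of vertices selected at step $i$ is determined by at most $|V|$ distinct feature tuples, each describable by a conjunction of atomic criteria of polynomial size). Hence a polynomial-size certificate exists whenever a feasible (respectively, exact) STP exists, and the combination of a polynomial certificate with a polynomial verifier places both problems in $\NP$.
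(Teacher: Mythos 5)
Your proof takes essentially the same route as the paper's: the paper simply points back to the SCP verifier and notes that evaluating a criterion (by structural recursion over its atomic and Boolean components) is still a polynomial-time operation, which is exactly the content of your first three paragraphs. Your final paragraph on bounding the certificate size --- replacing each $c_i$ by an equivalent DNF of at most $|V|$ conjunctions of atomic criteria --- addresses a subtlety the paper silently glosses over and is a worthwhile strengthening; note only that neither you nor the paper bounds the program length $n$ itself, which would also be needed to fully place the bare existence question in $\NP$.
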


\begin{proof}
    See the proof of \Cref{proposition:npnessOfSCPI}, with the difference that instead of comparing colours we verify whether a criterion (cf. \Cref{definition:criterion}) is satisfied, which too is a polynomial-time operation.
\end{proof}

\begin{definition}[Out-Neighbour Consistency]
    We say that $v \in V$ is a vertex with out-neighbours consistent with respect to $A,B$ (where $A \cap B = \emptyset$ and $A,B \subseteq N_o(v)$) if there exists no pair of vertices $x \in A, y \in B$ such that $\phi_V(x) = \phi_V(y)$.
    \\
    We say that $S \subseteq V$ is a vertex set with out-neighbours consistent with respect to $A,B$ (where $A \cap B = \emptyset$ and $A,B \subseteq N_o(S)$) if there exists no pair of vertices $x \in A, y \in B$ such that $\phi_V(x) = \phi_V(y)$.
\end{definition}

\begin{definition}[Building Criteria]\label{definition:ComputeCriterion}
    Let \textsc{ComputeCriterion}{(B, M, E)} be any algorithm that takes three vertex sets $B,M,E$ as input and outputs a criterion such that every vertex in $B$ is classified as ``\textit{Yes}'', ``\textit{Included}'' or $1$, every vertex in $E$ is classified as ``\textit{No}'', ``\textit{Excluded}'' or $0$, and any vertex in $M$ but not in $B$ is classified as either.
\end{definition}

Such algorithms exist, with CART \cite{breiman1984charles} and C4.5 \cite{quinlan2014c4} being two notable examples.
In our case it is further important that when the tree pruning phase of these algorithms is initiated, pruning is done only if it does not break the guarantees of \Cref{definition:ComputeCriterion} or omitted altogether.

\begin{lemma}[Criterion existence for pseudo-bases with out-neighbours consistent]
    \label{lemma:voncExclusionGuaranteesCriterionExistence}
    Let $B,M,E$ be vertex sets such that $B \subseteq M,E \cap M = \emptyset$.
    If there exists a pseudo-basis $\mathcal{B}$ with out-neighbours consistent for $B,M$ then a criterion for $B,M,E$ as per \Cref{definition:ComputeCriterion} exists.
\end{lemma}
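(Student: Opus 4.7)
The plan is to use the out-neighbour consistency hypothesis to deduce that every vertex of $B$ differs in its feature vector from every vertex of $E$, and then to construct a separating criterion explicitly using the grammar of \Cref{definition:criterion}. First, I would unpack the hypothesis: since $B \subseteq M$, $E \cap M = \emptyset$, and both $B$ and $E$ lie in $N_o(\mathcal{B})$, the sets $B$ and $E$ form a pair of disjoint subsets of $N_o(\mathcal{B})$, and out-neighbour consistency of $\mathcal{B}$ then delivers $\phi_V(x) \neq \phi_V(y)$ for every $x \in B$ and $y \in E$.

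Next, for each pair $(x,y) \in B \times E$ I would pick a feature $f_{x,y}$ on which the two feature vectors disagree and form the atomic criterion $c_{x,y} := (f_{x,y}, =, \nu_{x,y})$, where $\nu_{x,y}$ is the value of $\phi_V(x)$ at coordinate $f_{x,y}$. By construction $x$ satisfies $c_{x,y}$ while $y$ does not. Combining these with the connectives admitted by \Cref{definition:criterion}, I would set $c_x := \bigwedge_{y \in E} c_{x,y}$ for each $x \in B$, and then $c := \bigvee_{x \in B} c_x$. Every $x \in B$ satisfies its own $c_x$ and hence $c$; every $y \in E$ violates at least one conjunct of each $c_x$, so $y$ satisfies no $c_x$ and hence fails $c$; and vertices in $M \setminus B$ may end up on either side of $c$, which is exactly what \Cref{definition:ComputeCriterion} permits.

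The main obstacle is interpretive rather than technical: the phrase ``out-neighbours consistent for $B,M$'' is not a direct instance of the formal definition of out-neighbour consistency, which demands its two arguments to be disjoint, whereas $B \subseteq M$. I would read it as asserting feature-distinctness between $B$---the must-include subset of $N_o(\mathcal{B})$---and the must-exclude out-neighbours of $\mathcal{B}$, namely those lying outside $M$; since $E \cap M = \emptyset$, the set $E$ sits among these must-exclude vertices and the reading is sufficient for the argument to go through. Beyond fixing this reading, the remaining care is only to handle the degenerate cases $B = \emptyset$ (the disjunction $c$ is empty and the Yes-requirement is vacuous) and $E = \emptyset$ (each $c_x$ is an empty conjunction, replaceable by any trivially-true atomic criterion such as $(f_\star, =, \nu_\star)$ for a feature value actually taken by $x$).
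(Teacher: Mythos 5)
Your proof is correct and follows essentially the same route as the paper's: extract pairwise feature-distinctness between $B$ and $E$ from the out-neighbour consistency of $\mathcal{B}$, then assemble an explicit separating criterion from per-pair atomic splits. The only substantive difference is in the final assembly step: the paper simply asserts that ``a conjunction of these splits for all $b,e$'' is the desired criterion, which taken literally fails when $\left|B\right| > 1$, since a split separating $b$ from $e$ may also exclude some other $b' \in B$; your disjunction-of-conjunctions $\bigvee_{x \in B} \bigwedge_{y \in E} c_{x,y}$ is the correct repair and is fully licensed by \Cref{definition:criterion}. You also flag, rightly, that ``out-neighbours consistent for $B,M$'' does not literally instantiate the definition (whose two arguments must be disjoint, whereas $B \subseteq M$); the paper's own proof silently substitutes ``consistent w.r.t.\ $B,E$'', and your reading---distinctness between $B$ and the excluded out-neighbours outside $M$, of which $E$ is a subset in the algorithm's usage---is the intended one. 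Your handling of the degenerate cases $B = \emptyset$ and $E = \emptyset$ is a minor additional point of care the paper omits.
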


\begin{proof}
    Let $b \in B, e \in E$.
    Since $\mathcal{B}$ is a vertex set with out-neighbours consistent w.r.t $B,E$, $b,e$ cannot have the same features.
    Hence there exists a feature $f$ such that $\phi_V(b) \neq \phi_V(e)$.
    Thus, there exists a split $s_{b,e}$ on $f$ that separates $b,e$.
    A conjunction of these splits for all $b,e$ is a criterion for $B,M,E$ as per \Cref{definition:ComputeCriterion}.
\end{proof}

If a criterion exists, then both CART and C4.5, if left unterminated, will eventually build a decision tree that achieves the perfect separation.
Thus, either can be used as the \textsc{BuildCriteron} routine.

Our strategy to tackle STP mining is to find pseudo-bases with out-neighbours consistent for each step of a potential program, and then to find criteria (out-neighbourhood classifiers) that correspond to those pseudo-bases.
\Cref{lemma:voncExclusionGuaranteesCriterionExistence} shows that once an appropriate pseudo-basis has been found, the criterion can be found thanks to the consistency.

Multiple approaches can be taken to implement this strategy.
If getting \textit{a} solution is the priority, one can perform a depth-first search of pseudo-bases, and the moment the first valid sequence of pseudo-bases encapsulating $S$ at the beginning and hitting exactly $T$ at the end is found, find the step criteria and terminate.
Since by \Cref{lemma:voncExclusionGuaranteesCriterionExistence} we know that for pseudo-bases with out-neighbours consistent a criterion always exists, there is no utility in computing criteria on the while pseudo-bases are still being determined, as this process presents no benefit to the determination of pseudo-bases.
For the sake of consistency with earlier sections we chose to perform a breadth-first search of our pseudo-bases instead.

\medskip
\noindent In \Cref{alg:bpf}, let $Q, Q_{\text{next}}$ be queues lists of triples drawn from \textit{vertex-sets $\times$ vertex-sets $\times$ $\mathbb{Z}$}, $P$ be a set of \textit{programs}. The subject of our study is \Cref{alg:bpf}. Each list $\mathpzc{l}$ in $Q$ is represents a chain of pseudo-bases that might trace out a program path from $S$ to $T$. Every triple $(B, M, n)$ in $\mathpzc{l}$ and $Q_{\text{next}}$ represents from where to begin $B$ to reach $T$, the in-neighbourhood with out-neighbours consistent $B \subseteq M \subseteq N_i(N_o(B))$, and the saved distance $n$ from $B$ to $T$.

\vspace{0pt}

\begin{algorithm}[h!]
\caption{(BPF) Basis-Path-Finding}\label{alg:bpf}
\KwInit{$Q_{\text{next}}$ contains only the singleton list $(T, T, 0)$,\; $Q$, $\mathcal{L}$, $P$ are empty}

\For{$\ell \geq 1$ such that $T \subseteq N_o^{\ell}(S)$\label{line:bpfForwardPass}}{
    $Q \gets Q_{\text{next}}$ and empty $Q_{\text{next}}$\;
    \While{$Q$ is not empty\label{line:bpfBackwardPass}}{
        pop the list $\mathpzc{l}$ from $Q$\;
        let $(B, M, m)$ be the head, $n$ the length of $\mathpzc{l}$\;
        \;
        \If{$n-1=\ell$}{
            \If{$B \subseteq S \subseteq M$\label{line:bpfIfIsFeasible}}{
                add $\mathpzc{l}$ to $\mathcal{L}$\label{line:bpfAddFeasibleProgram}\;
            }
            push $\mathpzc{l}$ into $Q_{\text{next}}$\;
            \Continue
        }
        \;
        $M' \gets N_i(B) \cap N_o^{\ell-n}(S)$\;
        remove from $M'$ vertices with out-neighbours inconsistent w.r.t. $B,N_o(M') \backslash M$\;

        \ForEach{pseudo-basis $B' \subseteq M'$ for $(B, M)$\label{line:bpfSearchAllValidInjectionsStart}}{
            push $(B', M', n)$ into $\mathpzc{l}$\label{line:bpfAddExtendedPossibleProgram}
        }\label{line:bpfSearchAllValidMonochromaticPresetsEnd}
    }\label{line:bpfCombinatorialLoopEnd}
    \;
    \ForEach{list $\mathpzc{l}$ in $\mathcal{L}$\label{line:bpfEnumerateCandidateListsStart}}{
        $p \gets \epsilon$\;
        pop the head of $\mathpzc{l}$ and discard\;
        \ForEach{element $(B, M, n)$ in $\mathcal{L}$\label{line:bpfEnumerateCandidateListElementsStart}}{
            $c \gets$ \textsc{ComputeCriterion}{($B$, $M$, $N_o^{\ell-n}(S) \backslash M$)}\;
            $p \gets cp$\;
        }\label{line:bpfEnumerateCandidateListElementsEnd}
        add $p$ to $P$
    }\label{line:bpfEnumerateCandidateListsEnd}
}
\end{algorithm}

\begin{proposition}
     The following hold.\label{proposition:characteristicsOfBPF}
    \begin{enumerate}
        \item \Cref{alg:bpf} is correct in the sense that all programs in $P$ are exact programs for $S,T$.
        \item \Cref{alg:bpf} is complete in the sense that whenever execution exists the loop closing at \cref{line:bpfEnumerateCandidateListsEnd}, $P$ contains all exact programs for $S,T$ of length $\ell$, up to criterion equivalence.
    \end{enumerate}
\end{proposition}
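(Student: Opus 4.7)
The plan is to mirror the two-part proof of \Cref{proposition:characteristicsOfVIBE}, replacing colours with criteria and replacing the colour-branching of \Cref{alg:vibe} by the out-neighbour-consistency filter of \Cref{alg:bpf}. The role played there by $c$-pseudo-bases inside monochromatic neighbourhoods $N_d$ is here taken over by pseudo-bases selected from the pruned $M'$, for which \Cref{lemma:voncExclusionGuaranteesCriterionExistence} will then guarantee that a compatible separating criterion exists and can be produced by \textsc{ComputeCriterion} in the second loop beginning at \cref{line:bpfEnumerateCandidateListsStart}.

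For correctness, I would induct on the length of chains $\mathpzc{l}$ that ever appear in $Q$, maintaining the invariant that for every triple $(B, M, n)$ occurring in such a chain, any criterion $c$ satisfying \Cref{definition:ComputeCriterion} on the input $(B, M, N_o^{\ell-n}(S) \setminus M)$ sends $N_o$ of the immediately-succeeding (closer-to-$S$) pseudo-basis into a set sandwiched between $B$ and $M$. The base case is the singleton $(T,T,0)$, for which $E_\epsilon(T) = T$ trivially. The inductive step uses the fact that just before a new pseudo-basis $B'$ is pushed, $M'$ has been stripped of vertices with inconsistent out-neighbours, so by \Cref{lemma:voncExclusionGuaranteesCriterionExistence} the criterion later assigned at that step is guaranteed to land $N_o(B')$ back into a subset of $M$ containing $B$. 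Composing these transitions along the chain and then invoking the feasibility test on \cref{line:bpfIfIsFeasible} will give $E_p(S) = T$, exactly as in the VIBE argument.

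For completeness, I would fix an exact program $p : c_1 \cdots c_\ell$ with intermediate endpoints $E_i := E_p^i(S)$ and show by induction on the length of the suffix $\sigma$ in decompositions $p = \pi\sigma$ that, by the time the inner loop at \cref{line:bpfCombinatorialLoopEnd} terminates, $Q$ has contained a list whose head triple $(B, M, n)$ satisfies $B \subseteq E_{|\pi|} \subseteq M$. The base case is again the initial triple $(T,T,0)$. The inductive step splits into two observations: first, that $E_{|\pi|-1}$ is a valid pseudo-basis for the inductively given $(B,M)$, which follows from \Cref{lemma:coveringButNotOutspanningInIntermediateEndpoints}; and second, the key new observation that $E_{|\pi|-1}$ survives the consistency filter on $M'$, because $c_{|\pi|}$ classifies vertices with identical feature vectors identically, so any out-neighbour of $v \in E_{|\pi|-1}$ sharing features with some $x \in B \subseteq E_{|\pi|}$ must itself lie in $E_{|\pi|} \subseteq M$ and hence cannot witness an inconsistency with the complement set $N_o(M') \setminus M$.

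The main obstacle I expect is making rigorous the caveat ``up to criterion equivalence''. The criteria produced by \textsc{ComputeCriterion} are not required to coincide with the original $c_i$; they are only constrained to obey \Cref{definition:ComputeCriterion} on $(B, M, N_o^{\ell-n}(S) \setminus M)$, which still forces them to classify exactly those out-neighbours of the preceding pseudo-basis that lie in $B$ as $1$ and those lying in the excluded complement as $0$, with freedom on $M \setminus B$. The appropriate equivalence is therefore ``induces the same sequence of endpoints when applied to $S$'', and verifying that \textsc{ComputeCriterion} always yields a program in the same equivalence class as $p$, together with checking that the consistency filter neither drops a genuine endpoint nor admits a spurious one, are the only non-mechanical steps; everything else is a direct transcription of the VIBE argument with $c$-injection replaced by consistency-respecting pseudo-basis selection.
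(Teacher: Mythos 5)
Your proposal is correct and follows essentially the same route as the paper, which itself only sketches the argument as "analogous to the VIBE proof": a forward induction along the chain of pseudo-bases for correctness (each \textsc{ComputeCriterion} output sandwiches the next endpoint between $B$ and $M$) and a backward induction over suffix decompositions for completeness, with equivalence of programs taken to mean inducing the same endpoint sequence. If anything, you supply a detail the paper omits — the verification that a genuine intermediate endpoint $E_{|\pi|-1}$ survives the out-neighbour-consistency filter because the true criterion $c_{|\pi|}$ classifies feature-identical vertices identically — which is exactly the point that needs checking beyond the VIBE argument.
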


\begin{proof}
    The proposition is analogous with the \Cref{proposition:characteristicsOfVIBE}, and as such analogous proofs can be constructed.
    
    \noindent Briefly, for correctness, if a candidate list 
    \[
        \mathpzc{l} = (B_0, M_0, \ell+1)(B_1, M_1, \ell) \cdots (B_{\ell}, M_{\ell}, 1)(T, T, 0)
    \]
    has been added $\mathcal{L}$, then it must have been the case that $S$ is a vertex set with out neighbours consistent spanning $B_1$ but not outspanning $M_1$.
    Inductively, it must have been the case that $B_k$ spans $B_{k+1}$ but does not outspan $M_{k+1}$.
    Now, the loop on \cref{line:bpfEnumerateCandidateListsStart} ensures that at each step, the program always proceeds to a set of vertices $S_k$ such that $B_k \subseteq S_k \subseteq M_k$.
    Hence any program in $P$ is correct.
    
    For completeness, just notice that following any $p \in P$ from the end there is always a pseudo-basis $B_k \subseteq E_{p_{\leq k}}(S)$ that spans $B_{k+1}$ but (trivially) does not outspan $E_{p_{\leq k+1}}(S)$.
    So a valid chain of pseudo-bases exists, will be found and added to $\mathcal{L}$, and the corresponding chain of criteria logically equivalent to those of $p$ (but not necessarily the same) will then be added to $P$.
\end{proof}

\vspace{-4pt}

As in the discussion of \Cref{alg:vibe}, \Cref{alg:bpf} can easily be modified to search for feasible and not just exact programs.

The runtime of the algorithms we propose depends greatly on the network. In the worst case -- on a fully connected graph, our algorithms perform a total enumeration of all possible programs. However, real-world labelled graph datasets tend to contain significant amounts of pattern structure, suggesting performance far better than that of the worst case.

\section{Conclusion}
\label{section:conclusion}
\vspace{-8pt}
We have pointed at the previously unaddressed problem of characterising relationships between groups of records in a database in terms of their long-distance connections within the database graph.
We have identified the problem of graph-walking program mining as a simple case of this wider challenge, and investigated simple colour and totally-ordered set program mining.
We addressed them by giving the Viable Injection Basis Enumeration and Basis-Path-Finding algorithms, and proved their correctness and completeness.

The main observation allowing us to sharply limit the search space is that the set of vertices through whom agent executing a simple program proceeds is bounded below by an appropriate basis and above by consistency with respect to out-vertices.
The construct corresponding to these bounds in the process of mining is the criterion \textit{pseudo-basis}, appearing as a necessary and sufficient condition on local existence of feasible programs.
We have further shown that the problems of simple program mining are $\NP$.

We have hinted that more complex program structures can be employed in graph-walking programs, and that the programs do not need to be deterministic.
While adding to the structure of graph-walking programs would likely hinder their interpretation as relationships between sets of vertices, we believe that our current work can be extended by considering probabilistic graph-walking programs, further expanding the utility of graph-walking programs in the context of network analysis and beyond.

\vspace{-4pt}

\bibliographystyle{splncs04}
\bibliography{bibliography}

\end{document}